\documentclass{article}

\usepackage[preprint, nonatbib]{neurips_2026}

\usepackage[utf8]{inputenc} 
\usepackage[T1]{fontenc}    
\usepackage[textsize=tiny]{todonotes}
\usepackage[dvipsnames]{xcolor}
\usepackage{hyperref}       
\usepackage{url}            
\usepackage{booktabs}       
\usepackage{amsfonts}       
\usepackage{nicefrac}       
\usepackage{microtype}      
\usepackage{amsmath}
\usepackage{amssymb}
\usepackage{mathtools}
\usepackage{amsthm}
\usepackage{xspace}
\usepackage{graphicx}
\usepackage{hyperref}
\usepackage{pifont}
\usepackage{bbm}
\usepackage{multirow}
\usepackage{makecell}
\usepackage{enumitem}
\usepackage{wrapfig}
\usepackage{algpseudocode}
\usepackage{algorithm}
\usepackage{subcaption}
\usepackage[square,numbers]{natbib}
\usepackage[capitalize,noabbrev]{cleveref}

\definecolor{darkpurple}{HTML}{660066}
\definecolor{figred}{HTML}{B85450}
\definecolor{figgreen}{HTML}{82B366}
\definecolor{figgray}{HTML}{808080}

\newcommand{\probp}{\mathbb{P}}

\newcommand{\cmark}{\textcolor{green!60!black}{\ding{51}}}
\newcommand{\xmark}{\textcolor{red!70!black}{\ding{55}}}

\newcommand*\circledred[1]{%
\tikz[baseline=(char.base)]{
  \node[shape=circle, draw=BrickRed!60, fill=BrickRed!10, thick, inner sep=1pt] (char) {\scriptsize\textsf{#1}};
}}
\newcommand*\circledblue[1]{%
\tikz[baseline=(char.base)]{
  \node[shape=circle, draw=NavyBlue!60, fill=NavyBlue!10, thick, inner sep=1pt] (char) {\scriptsize\textsf{#1}};
}}
\newcommand{\method}{TSC\xspace}

\theoremstyle{plain}
\newtheorem{theorem}{Theorem}[section]

\theoremstyle{definition}

\newtheorem{assumption}[theorem]{Assumption}
\theoremstyle{remark}

\definecolor{darkpurple}{HTML}{660066}
\definecolor{figred}{HTML}{B85450}
\definecolor{figgreen}{HTML}{82B366}
\definecolor{figgray}{HTML}{808080}
\definecolor{darkgreen}{rgb}{0.0, 0.5, 0.0}
\definecolor{lightyellow}{HTML}{FFE699}
\definecolor{red_revision}{HTML}{FF0000}
\definecolor{darkblue}{HTML}{2E6EB3}
\definecolor{derkgreen}{HTML}{3E7D00}
\definecolor{darkyellow}{HTML}{D99542}
\definecolor{darkpurple}{HTML}{660066}

\usepackage{amsmath,amsfonts,bm}

\def\Figref#1{Figure~\ref{#1}}

\def\eqref#1{equation~\ref{#1}}

\def\1{\bm{1}}

\def\rvw{{\mathbf{w}}}

\DeclareMathAlphabet{\mathsfit}{\encodingdefault}{\sfdefault}{m}{sl}
\SetMathAlphabet{\mathsfit}{bold}{\encodingdefault}{\sfdefault}{bx}{n}

\def\gA{{\mathcal{A}}}

\def\gD{{\mathcal{D}}}

\def\gN{{\mathcal{N}}}

\def\gT{{\mathcal{T}}}

\def\gX{{\mathcal{X}}}
\def\gY{{\mathcal{Y}}}
\def\gZ{{\mathcal{Z}}}

\def\sR{{\mathbb{R}}}

\newcommand{\E}{\mathbb{E}}

\title{Targeted Synthetic Control Method}

\author{
  \textbf{Yuxin Wang}\thanks{Correspondence to: \texttt{Yuxin.Wang1@lmu.de}},\, \textbf{Dennis Frauen}, \textbf{Emil Javurek}, \textbf{Konstantin Hess}, \textbf{Yuchen Ma}, \\
  \textbf{Stefan Feuerriegel}\\
  LMU Munich \\
  Munich Center of Machine Learning (MCML)
}

\begin{document}

\maketitle

\begin{abstract}
    The \textit{synthetic control method} (SCM) estimates causal effects in panel data with a single-treated unit by constructing a counterfactual outcome as a weighted combination of untreated control units that matches the pre-treatment trajectory. In this paper, we introduce the \textbf{targeted synthetic control (TSC)} method, a new two-stage estimator that directly estimates the counterfactual outcome. Specifically, our TSC method (1)~yields a targeted debiasing estimator, in the sense that the targeted updating refines the initial weights to produce more stable weights; and (2)~ensures that the final counterfactual estimation is a convex combination of observed control outcomes to enable direct interpretation of the synthetic control weights. TSC is flexible and can be instantiated with arbitrary machine learning models. Methodologically, TSC starts from an initial set of synthetic-control weights via a one-dimensional targeted update through the weight-tilting submodel, which calibrates the weights to reduce bias of weights estimation arising from pre-treatment fit. Furthermore, TSC avoids key shortcomings of existing methods (e.g., the augmented SCM), which can produce unbounded counterfactual estimates. Across extensive synthetic and real-world experiments, TSC consistently improves estimation accuracy over state-of-the-art SCM baselines.
\end{abstract}

\section{Introduction}

\begin{wrapfigure}[19]{r}{0.4\textwidth}
    \centering
    \vspace{-0.3cm}
    \includegraphics[width=1\linewidth]{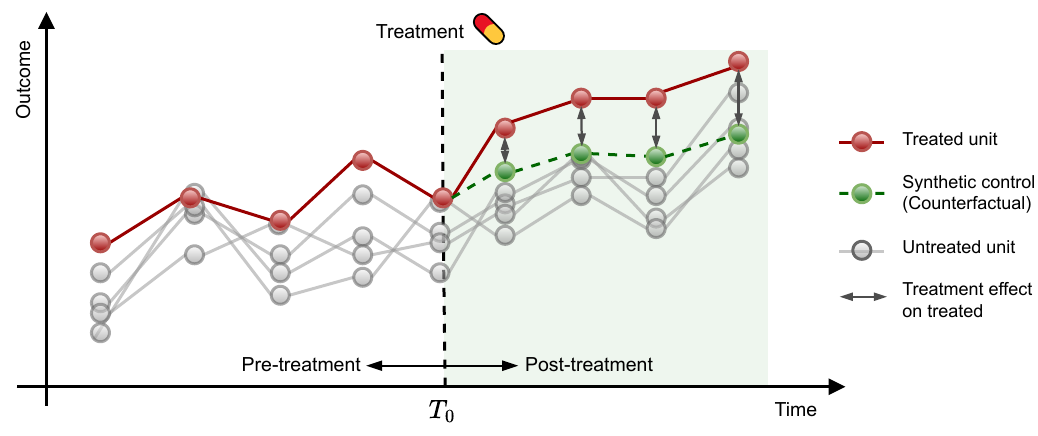}
    \caption{\textbf{Synthetic control setting}. Panel data with outcomes for single-treated unit (\textcolor{figred}{red}) and multiple control units (\textcolor{figgreen}{green}). Treatment is assigned at $T_0$, so that the pre-treatment history of the treated unit is used to learn synthetic-control weights based on other pre-treatment histories of control units (\textcolor{figgray}{gray}), while the post-treatment period is used to estimate the treatment effect between the treated unit and the synthetic control (dark arrow).}
    \label{fig:data_overview}
    \vspace{-0.4cm}
\end{wrapfigure}

The \textit{synthetic control method}~(SCM) is a widely used approach for estimating causal effects in panel data settings with a single treated unit and multiple untreated control units observed over a long pre-treatment period \citep{abadie.2003, abadie.2010}. For this, the SCM method constructs a counterfactual outcome for the treated unit as a convex combination (i.e., weighted average) of the control units based on the pre-treatment history. The estimand of interest is the causal effect on the treated unit after treatment, defined as the difference between the observed outcome of the treated unit and the outcome of the synthetic control (see Figure~\ref{fig:data_overview}).

\textbf{Example.} \emph{A canonical application of SCM is the evaluation of California’s tobacco control program (Proposition 99), enacted in 1989 \citep{abadie.2010}. Because the policy was adopted by a single state, simple before–and–after comparisons are confounded by national smoking trends. The SCM addresses this by constructing a ``synthetic'' California under no tobacco control program as a weighted combination of other U.S. states that did not adopt the program, with weights chosen to match California’s pre-1989 cigarette sales. The difference between the observed cigarette consumption after 1989 and the outcome of synthetic control then provides an estimate of the effect of the policy intervention.}

In this work, we propose the \textbf{\emph{targeted synthetic control}} (TSC) method, a new estimator for the single-treated unit setting. Our TSC method is flexible and can be instantiated with arbitrary machine learning (ML) models. \method is inspired by targeted maximum likelihood estimation (TMLE) to directly estimate the synthetic-control weights through a targeted updating procedure tailored to the single-treated unit setting. As a result, TSC has three favorable properties:
\vspace{-0.2cm}
\begin{enumerate}[leftmargin = 0.5cm]
\item[\circledblue{1}]  \emph{Interpretable synthetic-control weights.} \method ensures, by construction of the estimation procedure, that the estimated counterfactual outcome is a convex combination of observed control outcomes. The counterfactual is defined as a weighted average with nonnegative weights that sum to one, so the weights can be directly interpreted as relative contributions of control units. 
\vspace{-0.1cm}
\item[\circledblue{2}] \emph{Boundedness of estimation.} Our updating step yields bounded estimates while preserving interpretable weights: nonnegative, normalized weights keep the predicted counterfactual within the support of control outcomes and avoid extreme extrapolation.
\vspace{-0.1cm}
\item[\circledblue{3}] \emph{Estimation accuracy.} By using flexible machine learning and enforcing the residual-correcting term to zero through a one-dimensional TMLE-style targeted update, \method calibrates the synthetic-control weights using residuals of control units between outcome-regression and observed outcomes to reduce error arising from imperfect pre-treatment weight estimation. We find that this improves finite-sample stability and yields consistently improved estimation accuracy in our numerical experiments.
\vspace{-0.2cm}
\end{enumerate}

Methodologically, \method follows a two-stage estimation procedure\footnote{In the causal inference literature, procedures of this form are often referred to as \textit{meta-learners}, as they can be instantiated with arbitrary machine learning models for nuisance estimation, and then pass the estimated nuisance function into a second-stage regression, typically by regressing a constructed pseudo-outcome on covariates, to estimate the estimand of interest~\citep{kunzel.2019, robins.1994,foster.2023}.}. In the first stage, \method obtains an initial set of synthetic-control weights (e.g., obtained via the classical or regularized SCM~\citep{abadie.2003, abadie.2010}) and fits an outcome-regression model trained on the pre-treatment history for predictions of post-treatment outcomes. In the second stage, \method refines the weights through a one-dimensional \textit{``targeted''} update. Following principles from targeted maximum likelihood estimation (TMLE)~\citep{laan.2006, vanderlaan.2011}, we adopt a targeted update that leverages residual information from the control units to mitigate error arising from imperfect pre-treatment weight estimation. The resulting estimator remains a weighted combination of observed control outcomes, thereby preserving the interpretation of synthetic-control weights as relative contributions of control units.

As a result, \method offers several practical advantages over existing methods (see Table~\ref{tab:related_work}). $\bullet$ Unlike the \textbf{classical SCM}~\citep{abadie.2003, abadie.2010}, TSC updates the synthetic-control weights using residuals from a flexible machine-learning outcome-regression model, improving robustness of weight estimation. $\bullet$ Unlike the \textbf{augmented SCM}~\citep{ben-michael.2021}, \method preserves the synthetic control structure: the final counterfactual remains a convex combination of observed control-unit outcomes, with nonnegative weights that sum to one. In contrast, augmented SCM applies an additive regression correction, which can produce the counterfactual outcome that leaves the convex hull of the control outcomes and becomes unbounded. 

We make three key \textbf{contributions}\footnote{Code is available at: \url{https://anonymous.4open.science/r/targeted_synthetic_control-DBEF}}: \textbf{(1)} To the best of our knowledge, we are the first to link SCM to a TMLE-style targeted updating perspective while retaining interpretable synthetic-control weights. \textbf{(2)}~We propose a novel \method method that can be combined with flexible machine learning models while ensuring bounded counterfactual predictions. \textbf{(3)}~Through extensive experiments, we demonstrate state-of-the-art performance of our \method. 

\vspace{-0.2cm}
\section{Related work}
\vspace{-0.3cm}

\textbf{ML for causal inference:} There has been a rapidly growing interest in leveraging ML for causal inference from observational data, primarily in `standard' settings with many treated units and sufficient treatment overlap~\citep[e.g.,][]{vanderlaan.2011, chernozhukov.2018, shalit.2017}, where ML offers a flexible strategy to estimate nuisance components, such as outcome-regressions and propensity scores, even in high-dimensional settings. Modern causal ML estimators are often based on a two-stage framework, which combines flexible nuisance estimation with a second stage based on the efficient influence function to remove plugin bias~\citep{kennedy.2023a}. Examples include AIPTW~\citep{robins.1994}, TMLE~\citep{laan.2006, vanderlaan.2011}, the DR-learner~\citep{kennedy.2023}, and R-learner~\citep{morzywolek.2024}.
\textit{However, such methods for synthetic controls are underexplored.}

\begin{table}[htbp]
\centering
\begin{minipage}{\linewidth}
\resizebox{1\textwidth}{!}{%
\begin{tabular}{llll}
\toprule
\textbf{Aspects} & \textbf{Classical SCM}~\citep{abadie.2003, abadie.2010} & \textbf{Augmented SCM}~\citep{ben-michael.2021} & \textbf{\method} \textit{(ours)}\\
\midrule
Core idea 
& Pre-treatment balancing weights
& Outcome-model augmentation
& TMLE-style weights tilting\\

Counterfactual form 
& Convex combination of control outcomes
& Prediction $+$ additive adjustment
& Convex combination of control outcomes\\
\midrule
ML model
& \xmark~(\textit{weights only})
& \cmark~(\textit{regression})
& \cmark~(\textit{regression})\\

Interpretable weights 
& \cmark~(\textit{weighted average})
& \xmark~(\textit{mixture format})
& \cmark~(\textit{weighted average})\\

\makecell[l]{Boundedness\\(e.g., binary)} 
& \cmark~(\textit{convex hull})
& \xmark~(\textit{prediction and additive term})
& \cmark~(\textit{convex hull})\\
\bottomrule
\end{tabular}
}%
\vspace{0.1cm}
\caption{\textbf{Overview of key synthetic control estimators.}
The table compares classical SCM, augmented SCM, and \method in terms of how they construct the counterfactual and what properties are preserved by the estimator’s final form.}
\label{tab:related_work}
\end{minipage}
\vspace{-0.8cm}
\end{table}

A related literature stream extends causal ML methods for panel data, such as under difference-in-differences (DiD) designs \citep{arkhangelsky.2021, lan.2025}. We provide an extended literature review in Appendix~\ref{app:extended_related_work}. \textit{However, these approaches typically focus on settings with multiple treated units and rely on strong identifying assumptions, such as parallel trends. As a result, they are \underline{not} directly applicable to the synthetic control setting.}

\vspace{-0.1cm}
\textbf{Synthetic control method (SCM):} The literature on the SCM originates from economics and statistics literature \citep{abadie.2003, abadie.2010}\footnote{A related line of work studies prediction-based approaches in settings similar to synthetic control, often referred to as ``synthetic twins'' \citep[e.g.,][]{qian.2021, bica.2019, bellot.2021a, seedat.2022}. However, these methods focus on forecasting the treated unit’s trajectory but typically are \emph{model-specific}.}, and constructs the counterfactual of the treated unit as a convex combination of outcomes from untreated control units. Here, the counterfactual outcome is a weighted average, so the weights directly reflect the relative contribution of each control unit. Several papers have extended the SCM to different settings, such as censored data~\citep{curth.2024}, differential privacy~\citep{rho.2023}, multiple outcomes~\citep{tian.2024}, and conformal prediction~\citep{chernozhukov.2021a}. More recent work relaxes parametric assumptions by adopting more flexible identification strategies \citep{shi.2022, shi.2023}. \textit{However, in the classical SCM, the counterfactual is a weighted average based on the weights estimated by matching pre-treatment histories, with the limitation: the estimator is sensitive to finite-sample error in the estimated weights.}

\textbf{Augmented SCM.} The augmented SCM aims to reduce bias by combining synthetic-control weighting with an additional outcome model, which can be estimated flexibly via ML \citep{ben-michael.2021}. Here, we mitigate bias from imperfectly estimated pre-treatment weights using a regression-based correction constructed from the control units. While this approach can improve accuracy, it alters the structure of the estimator (see Table~\ref{tab:related_work}): \textit{the resulting counterfactual is no longer a convex combination of observed control outcomes and can become unbounded. Consequently, synthetic-control weights can no longer be interpreted in the same way as in classical SCM, where they directly quantify the contribution of each control unit to the counterfactual.}

\section{Preliminaries}
\label{sec:problem_setup}

\subsection{Setup}

We consider the classical SCM setting~\citep{abadie.2003, abadie.2010} with $i = 1, 2, \dots, N$ units observed for $t \in \gT = \{1, 2, \dots, T_0, \dots, T\}$ time periods (see \Figref{fig:data_overview}). Throughout the paper, we assume both $N$ and $T$ are fixed.
We consider a population $\left(Z, \{Y_t(0)\}_{t = 1}^{T}, \{Y_t(1)\}_{t = 1}^{T}, A \right) \sim \mathbb{P}$, where $Z \in \gZ \subseteq \mathbb{R}^p$ are time-invariant baseline covariates, $A \in \gA = \{0, 1\}$ denotes the treatment, $\{Y_t(0)\}_{t = 1}^{T}$, and $\{Y_t(1)\}_{t = 1}^{T} \in \gY \subseteq \mathbb{R}^{T}$ are potential outcome trajectories.
We further have an observed panel dataset $\{(X_i, \bar{A}_{iT}, Y_{iT})\}_{i=1}^N \sim \mathbb{P}$ with:
\vspace{-0.3cm}
\begin{itemize}[leftmargin=1.0em]
\item \textbf{Treatment:} Let $\bar{A}_{iT}$ indicate the deterministic treatment scheme $\bar{A}_{iT} = \{A_{it}\}_{t = 1}^T$, where $A_{it} = \mathbf{1}\{i=1\}\mathbf{1}\{t > T_0\}$. It represents that unit $1$ is treated at time $T_0$, where other units never receive the treatment and serve as untreated control units (sometimes also referred to as ``donors'' in the SCM literature). We assume w.l.o.g. that unit $i = 1$ is the one being treated. 
\vspace{-0.2cm}
\item \textbf{Outcomes:} We adopt the potential outcomes framework~\citep{rubin.1974}. Let $Y_{i\tilde{t}}(a_i) \in \mathbb{R}$ denote the potential outcome at post-treatment time $\tilde{t}>T_0$ and treatment assignment $A_{i\tilde{t}} = a_{i\tilde{t}}$. We have observed outcomes follow $Y_{i\tilde{t}} = A_{i\tilde{t}}Y_{i\tilde{t}}(1) +  (1-A_{i\tilde{t}})Y_{i\tilde{t}}(0)$. For the treated unit, we write $Y_{1\tilde{t}}(1) = Y_{1\tilde{t}}(0) + \tau_{\tilde{t}}$, where $\tau_{\tilde{t}}$ is the treatment effect at time $\tilde{t} > T_0$. 
\vspace{-0.2cm}
\item \textbf{Covariates:} To ease notation, for each unit $i$, let $X_i = \left(Z_i, \{Y_{it}\}_{t = 1}^{T_0}\right) $ denote concatenated vector of time-invariant baseline covariates $Z_i$ and pre-treatment outcome history $\{Y_{it}\}_{t = 1}^{T_0}$, where $Y_{it} = A_{it}Y_{it}(1) +  (1-A_{it})Y_{it}(0)$, for $t = 1, \dots, T_0$.
\end{itemize}
\vspace{-0.2cm}

\textbf{Nuisance components:} We write the \textit{synthetic-control weights} as $\rvw = \left(w_2, \dots, w_N\right) \in\Delta_{N-1}$, where $w_j \in [0,1]$ quantifies the contribution of unit $j$ to the synthetic control, with constraint $\sum_{j=2}^N w_j = 1$. We further define the \textit{outcome-regression function} $m_{\tilde{t}}(x) = \E[Y_{\tilde{t}} \mid X = x, A = 0]$. In our method, we treat both $\rvw$ and $m_{\tilde{t}}(x)$ as \textit{nuisance components}, which are quantities that must be estimated from data but are not themselves the target of inference.

\textbf{Estimation task:} Our estimand of interest is the average treatment effect on treated at the post-treatment time $\tilde{t} > T_0$, i.e.,  
{$\tau_{\,\tilde{t}} = \E [Y_{\tilde{t}}(1) - Y_{\tilde{t}}(0) \mid X = X_1, A = 1]$.

In the single-treated-unit setting, we observe the treated realization $Y_{1\tilde{t}}$, while the main estimation challenge is to estimate the unobserved counterfactual $\E [Y_{1\tilde{t}}(0)\mid X = X_1, A = 1]$.

\textbf{Identifiability:} We follow the standard SCM assumptions \citep{abadie.2003, ben-michael.2021, shi.2022} to ensure identifiability of $\tau_{\tilde{t}}$:
\begin{assumption}
\label{assumptions}
For all $x\in \gX$, $a\in \gA$, it holds:
\vspace{-0.3cm}
\begin{enumerate}[leftmargin = 0.5cm]
\item[(i)]\textit{Stable unit treatment value assumption (SUTVA):} For all units $i = 1, \dots, N$ and times $t \in \gT$, the potential outcome of unit $i$ depends only on its own treatment status, $Y_{it}(a_{1t},\dots,a_{Nt})=Y_{it}(a_{it})$.\vspace{-0.2cm}
\item[(ii)]\textit{No anticipation:} For all control units $j \in \{2, \ldots, N\}$ and all times $t \in \gT = \{1,\dots, T\}$, we have $Y_{jt} = Y_{jt}(0)$. For the treated unit $i=1$, there are no pre-treatment effects and consistency, so that $Y_{1t} = Y_{1t}(0)$ for $t \le T_0$ and $Y_{1\tilde{t}} = Y_{1\tilde{t} }(1)$ for $\tilde{t} > T_0$. \vspace{-0.25cm}
\item[(iii)]\textit{Shared untreated outcome model:} For post-treatment time $\tilde{t}\ge T_0$, we have $\E[Y_{\tilde{t}}(0)\mid X=x, A=1]=\E[Y_{\tilde{t}}(0)\mid X=x, A=0]$.
\vspace{-0.2cm}
\item[(iv)]\textit{Existing synthetic-control weights:} There exists a weight vector $\rvw \in \Delta^{N-1}$ over the control units such that the treated unit's pre-treatment outcome history can be matched by a convex combination of control units: $\sum_{j=2}^N w_j X_j = X_1$, where $\Delta_{N-1}:=\{\rvw\in\mathbb{R}^{N-1}_+:\sum_{j=2}^N w_j=1\}$.
\end{enumerate}
\end{assumption}
\vspace{-0.2cm}
Assumption~\ref{assumptions} is standard in synthetic control literature~\citep{abadie.2003, ben-michael.2021, shi.2022}: (i) imposes SUTVA (no interference and consistency), thus ensuring that the observed outcomes coincide with the relevant potential outcomes; (ii) rules out anticipation effects and requires control units to never be treated, so that control outcomes satisfy $Y_{jt}=Y_{jt}(0)$ for all $t$, and the treated unit satisfies $Y_{1t}=Y_{1t}(0)$ for $t\le T_0$ and $Y_{1\tilde{t}}=Y_{1\tilde{t}}(1)$ for $\tilde{t}>T_0$; and (iii) assumes an invariant untreated outcome mechanism conditional on covariates $X$, which allows the conditional mean function $m_{\tilde{t}}(x)=\E[Y_{\tilde{t}}\mid X = x, A = 0]$ to be learned from control units and transported to the treated unit. Finally, (iv) imposes an existence assumption on synthetic-control weights, requiring that the treated unit’s pre-treatment outcome history can be exactly represented by a convex combination of control units. 

Under Assumptions~\ref{assumptions}, we can identify $\tau_{\tilde{t}}$ for $\tilde{t} > T_0$ as the statistical quantity
\begin{align}
\tau_{\,\tilde{t}} = Y_{1\tilde{t}} - m_{\tilde{t}}(X_1).
\end{align}
Note that we observed the post-treatment outcome of the treated unit $Y_{1\tilde{t}}$, so the statistical estimand to be estimated is $\psi_{\tilde{t}}:= m_{\tilde{t}}(X_1)$, which only depends on the population of control units. Thus, it can be estimated from control outcomes.

\subsection{Background on existing methods}

\textbf{Classical SCM:} The classical SCM constructs the counterfactual trajectory of the treated unit as an explicit convex combination of untreated control trajectories \citep{abadie.2010, abadie.2021}. The synthetic-control weights are chosen to balance pre-treatment histories and, optionally, additional covariates. Specifically, the classical SCM estimates weights $\rvw\in\Delta^{N-1}$ in the simplex 
\vspace{-0.2cm}
\begin{equation}
\label{eq:scm_simplex}
\Delta^{N-1}:=\{\rvw \in\mathbb{R}^{N-1}: w_j\ge 0,\ \sum_{j=2}^N w_j = 1\}
\end{equation}
\vspace{-0.6cm}

by solving a constrained matching problem
\vspace{-0.2cm}
\begin{equation}
\label{eq:sc_weights}
\hat{\mathbf{w}}^{\mathrm{sc}} \in  \arg\min_{\mathbf{w}\in\Delta^{N-1}}
\big\|
X_{1} - \sum_{j=2}^N w_j X_j
\big\|_{V}^{2},
\end{equation}
\vspace{-0.6cm}

where $V \succeq 0$ is a user-chosen importance matrix that prioritizes particular control units. Given the estimated weights, the synthetic control estimator (of the counterfactual) is
\vspace{-0.2cm}
\begin{equation}
\label{eq:sc_estimator}
\hat\psi_{\tilde{t}}^{\textsc{sc}} := \sum_{j=2}^{N} \hat w_{j}^{\textsc{sc}}\, Y_{j\tilde{t}}, \quad \tilde{t} > T_0,
\end{equation}
\vspace{-0.6cm}

and the corresponding treatment effect estimator is $\hat\tau_{\tilde{t}}^{\textsc{sc}} := Y_{1\tilde{t}} - \hat\psi_{\tilde{t}}^{\textsc{sc}}$.
In practice, the treated unit’s pre-treatment trajectory cannot be perfectly approximated by the synthetic-control weights, meaning that estimation error can propagate to the post-treatment period and bias the counterfactual estimate.

\textbf{Plug-in estimator:} We next introduce the so-called plug-in estimator. In the single-treated-unit setting, a direct plug-in estimate of $\psi_{\tilde{t}}=m_{\tilde{t}} (X_1)$ reduces to predicting the treated unit's untreated outcome:

\vspace{-0.4cm}
\begin{equation}
\label{eq:plugin_estimator}
\hat \psi_{\tilde{t}}^{\mathrm{plug\mbox{-}in}} =\hat m_{\tilde{t}}(X_1),
\end{equation}
\vspace{-0.5cm}

where $m_{\tilde{t}}(X)$ is the control-side outcome-regression at time $\tilde{t} > T_0$. A practical advantage of Eq.~\ref{eq:plugin_estimator} is its flexibility: $\hat m_{\tilde{t}}(X)$ can be learned from the control unit data using a wide range of predictive models with state-of-the-art ML models, such as representation learning~\citep{qian.2021}, recurrent neural network~\citep{bica.2019}, dynamical systems~\citep{bellot.2021a}, or controlled differential equation~\citep{seedat.2022}. However, because Eq.~\ref{eq:plugin_estimator} relies on extrapolating the learned regression to the treated unit, any estimation error in $\hat m_{\tilde{t}}(X)$ translates into bias in $\hat\psi^{\mathrm{plug\mbox{-}in}}_{\tilde{t}}$, which can be non-negligible in finite samples (see discussion in~\citep{kennedy.2023a}).

\textbf{Augmented SCM:} The augmented SCM solves an objective different from Eq.~(\ref{eq:sc_weights}) by relaxing the simplex constraint in Eq.~(\ref{eq:scm_simplex}). Hence, the weights are no longer interpretable as relative contributions of control units and therefore lose the interpretability of the original SCM. Unlike classical SCM, augmented SCM~\citep{ben-michael.2021} incorporates an outcome-regression model, $m_{\tilde{t}}(x)$, and uses it to adjust the synthetic-control estimate. Concretely, it starts from the treated unit’s plug-in prediction for the counterfactual and adds a weighted residual correction from the control units:
\vspace{-0.1cm}
\begin{align}
\label{eq:asc_estimator}
\hat\psi_{\tilde{t}}^{\textsc{asc}}
= \hat m_{\tilde{t}}(X_1) + \sum_{j=2}^N \hat w_j\bigl\{Y_{j{\tilde{t}}}-\hat m_{\tilde{t}}(X_j)\bigr\}.
\end{align}
\vspace{-0.5cm}

Because the residual correction term is additive, $\hat\psi_{\tilde{t}}^{\textsc{asc}}$ is not constrained to lie in the convex hull of the observed control outcomes and can therefore fall outside the outcome range.
\vspace{-0.2cm}
\section{Targeted synthetic control method}
\begin{figure*}[htbp]
    \centering
    \includegraphics[width=1\linewidth]{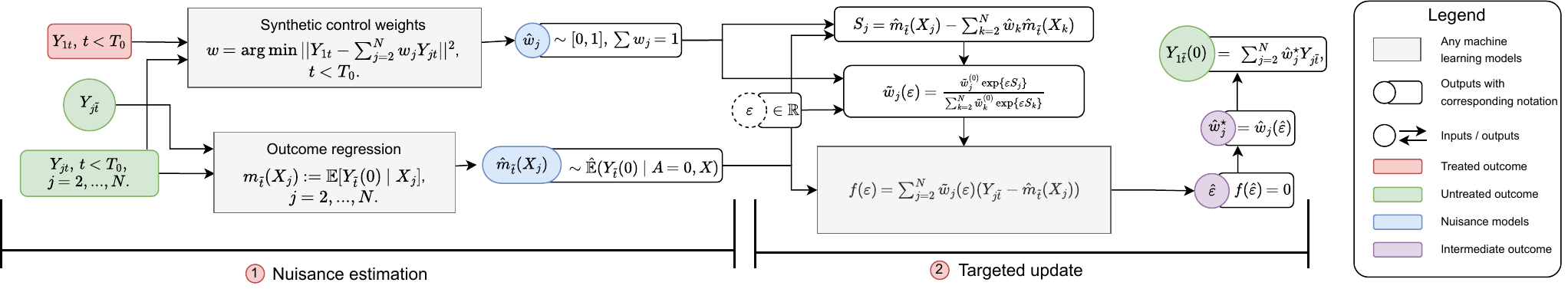}
    \caption{\textbf{Overview of \method}. Our method consists of two stages: \protect\circledred{1}~a \emph{nuisance component estimation} and \protect\circledred{2}~a \emph{targeted update}~for debiasing. In the nuisance stage, we (i) compute initial synthetic-control weights $\hat w$ by matching pre-treatment histories and (ii) fit an outcome-regression $\hat m_{\tilde{t}}(x)$ on control units using any machine-learning backbone. In the debiasing stage, we construct residual scores $S_j$ of control units and perform a one-dimensional exponential-tilting update $\tilde w(\varepsilon)$. The parameter $\hat\varepsilon$ is obtained by solving the condition $f(\hat\varepsilon)=0$. The resulting targeted weights $\hat w^\star=\tilde w(\hat\varepsilon)$ yield the final counterfactual $\hat \psi_{\tilde{t}}=\sum_{j=2}^N \hat w_j^\star Y_{j\tilde{t}}$ (and thus the treatment effect) while preserving interpretable synthetic-control weights.}
    \label{fig:framework}
    \vspace{-0.4cm}
\end{figure*}

Inspired by TMLE from classical causal inference, we now develop a two-stage estimator for the single-treated unit setting equipped with a customized TMLE-style targeting step for debiasing. Our TSC (i) debiases the initial synthetic-control weights by exponentially tilting the control-unit weights so that the weighted residuals from the outcome-regression average to zero, while keeping the weights on the simplex and therefore interpretable, and (ii) keeps the estimation of TSC in the convex hull of the outcomes of control units.

\vspace{-0.2cm}
\subsection{Overview of \method}
\label{sec:tscm}

Our \method proceeds in two stages (see Figure~\ref{fig:framework}): 
\begin{enumerate}[leftmargin = 0.5cm]
\vspace{-0.1cm}
\item[\circledred{1}] \emph{Nuisance component estimation.} We estimate the nuisance components with a flexible ML model. This includes (i)~a component for the synthetic-control weights that reweights control units to approximate the treated unit’s covariate distribution (e.g., via pre-treatment history matching) and (ii)~the outcome-regression $m_{\tilde{t}}(x)$ with cross-fitting. 
\vspace{-0.15cm}
\item[\circledred{2}] \emph{Targeted update.}  We start with the initial synthetic-control weights via a one-dimensional, TMLE-style targeted update. Specifically, we update the weights along an exponential-tilting submodel, which provides a controlled adjustment of the first-stage weights while preserving the simplex constraint (non-negativity and sum-to-one). As a result, the updated counterfactual remains a convex combination of observed control outcomes and is therefore bounded within the convex hull of control outcomes.
\vspace{-0.3cm}
\end{enumerate} 
The above procedure yields a \textit{targeted} estimator of $\psi_{\tilde{t}} = \E\!\left[Y_{\tilde{t}}\mid X = X_1, A = 1\right]$. Further, it supports interpretable weights and boundedness of the estimation. The full procedure is summarized in Algorithm~\ref{alg:tsc_tmle}. Below, we first introduce the intuition behind debiasing (§\ref{sec:intuition_of_debiasing}), which we finally turn into a targeted update using a weight-tilting submodel (§\ref{sec:targeted_update}). 

\vspace{-0.3cm}
\subsection{Intuition behind TSC}
\label{sec:intuition_of_debiasing}
\vspace{-0.1cm}

We start from the classical synthetic control approach. Specifically, we construct \emph{synthetic-control weights} $w\in\Delta^{N-1}$ by matching the controls’ pre-treatment history to that of the treated unit (e.g., pre-treatment outcomes or other chosen features), yielding $\hat \psi_{\tilde t}^{\textsc{sc}}=\sum_{j=2}^N \hat w_j\,Y_{j\tilde t}$. When accurate pre-treatment balance is infeasible, however, the resulting weights can be poorly estimated in finite samples, which makes this weights-only approximation sensitive and motivates the subsequent targeted update.

To motivate our subsequent debiasing step, it is useful to express the corresponding estimator in a debiasing form by adding and subtracting an outcome-regression $\hat m_{\tilde t}(\cdot)$:
\vspace{-0.2cm}
{\small{\begin{align}
\hat{\psi}_{\tilde{t}}^{\textsc{sc}}
=&\;\underbrace{\sum_{j=2}^N \hat{w}_j\, \hat{m}_{\tilde{t}}(X_j)}_{\text{weighted average}}+ \underbrace{\sum_{j=2}^N \hat w_j \big\{ Y_{j{\tilde{t}}} -\hat m_{\tilde{t}}(X_j)\big\}}_{\text{residual correction}}.
\label{eq:disc_onestep_decomp}
\end{align}}}
\vspace{-0.45cm}

Here, the first term is the classical-SCM-style weighted average of outcome-regression predictions of the control units, while the second term is the residual term over the control units, which we later calibrate to zero via updating the weights.

\textbf{Why we need $\hat{m}_{\tilde{t}}(X)$:} 
A remaining limitation of Eq.~\ref{eq:sc_estimator} is that it predicts the treated outcome at $\tilde t>T_0$ solely as a weighted average of observed control outcomes. As a result, the estimator is sensitive to finite-sample error in the estimated weights. To reduce this sensitivity, we fit an outcome-regression $\hat m_{\tilde{t}}(\cdot)$ on the control units to predict $Y_{j\tilde{t}}$ from $X_j$. This regression captures the systematic component of $Y_{j\tilde{t}}$ associated with $X_j$, so the remaining residuals $Y_{j\tilde{t}}-\hat m_{\tilde{t}}(X_j)$ are closer to mean-zero noise and can be more stably aggregated by the weighted correction term.

\subsection{Derivation of the targeted update}
\label{sec:targeted_update}

\begin{wrapfigure}[20]{r}{0.52\textwidth}
\vspace{-1.7cm}
\begin{minipage}{0.52\textwidth}
\begin{algorithm}[H]
\caption{Targeted Synthetic Control Method}
\label{alg:tsc_tmle}
\begin{algorithmic}[1]
\Require Control units $\{(X_j, Y_{j\tilde{t}})\}_{j=2}^N$, treated unit $X_1$, iterations $R$, step size $\eta > 0$
\State \textbf{/* Stage 1: Nuisance estimation */}
\State $\hat{w}^0 \gets \arg\min_{w \in \Delta^{N-1}} \| X_1 - \sum_{j=2}^N w_j X_j \|_{V}^{2}$
\State $\hat{m}_{\tilde{t}}(x) \gets \mathbb{E}[Y_{\tilde{t}} \mid X = x, A = 0]$
\State \textbf{/* Stage 2: Targeted update */}
\State $S_j \gets \hat{m}_{\tilde{t}}(X_j) - \sum_{k=2}^{N} \hat{w}^0_k \hat{m}_{\tilde{t}}(X_k)$
\State Initialize $\varepsilon \gets 0$
\For{$r = 1$ \textbf{to} $R$}
    \State $\hat{w}_j(\varepsilon) \gets \dfrac{\hat{w}^0_j \exp(\varepsilon S_j)}{\sum_{k=2}^{N} \hat{w}^0_k \exp(\varepsilon S_k)}$
    \State $g(\varepsilon) \gets \sum_{j=2}^{N} \hat{w}_j(\varepsilon) \bigl( Y_{j\tilde{t}} - \hat{m}_{\tilde{t}}(X_j) \bigr)$
    \State $\varepsilon \gets \varepsilon - \eta\, g(\varepsilon)$
\EndFor
\State $\hat{w}^\star \gets \hat{w}(\hat{\varepsilon})$
\State $\hat{\psi}^{\textsc{tsc}}_{\tilde{t}} \gets \sum_{j=2}^{N} \hat{w}^\star_j Y_{j\tilde{t}}$
\State \Return $\hat{w}^\star$, $\hat{\psi}^{\textsc{tsc}}_{\tilde{t}}$
\end{algorithmic}
\end{algorithm}
\end{minipage}
\vspace{-1em}
\end{wrapfigure}
Inspired by TMLE~\citep{laan.2006, vanderlaan.2011}, we now introduce a targeting update. The key idea of targeting is to take an initial estimate (here an initial set of synthetic-control weights) and then perform a one-dimensional update guided by Eq.~\ref{eq:disc_onestep_decomp} over a submodel, so that the resulting estimator satisfies the residual correction equation with interpretable synthetic-control weights. Below, we first specify (i)~how we obtain the initial weight estimates, (ii)~the submodel to ensure the interpretability of the synthetic-control weights, and (iii)~the targeted update. 

\textbf{(i)~Initial synthetic-control weight estimates.} The starting point is a standard synthetic control estimator for the treated unit at the post-treatment time $T$:

\vspace{-0.5cm}
\begin{equation}
\hat{\psi}_{\tilde{t}} \;=\; \sum_{j=2}^{N}\hat w^{0}_j\,Y_{j\tilde{t}}, \qquad \tilde{t} > T_0,
\label{eq:sc_plugin}
\end{equation}
\vspace{-0.4cm}

where $\hat w^{0}=(\hat w^{0}_2,\ldots,\hat w^{0}_N)$ denotes an initial set of synthetic-control weights learned by Eq.~\ref{eq:scm_simplex} (e.g., obtained by pre-treatment matching under simplex constraints).

\textbf{(ii)~Weight-tilting submodel.} We define a one-dimensional exponential tilting (softmax) submodel around $\hat w^{0}$, which provides a smooth, one-dimensional way to update the initial weights $\hat w^{0}$ while preserving the simplex constraints (non-negativity and unit sum). We thus define the, for $j=2,\ldots,N$,
\vspace{-0.2cm}
{\small\begin{equation}
\hat w_j(\varepsilon)
\;=\;
\frac{\hat w^{0}_j \exp(\varepsilon S_j)}
{\sum_{k=2}^{N}\hat w^{0}_k \exp(\varepsilon S_k)},
\label{eq:weight_tilt}
\end{equation}}
guaranteeing $\hat w_j(\varepsilon)\ge 0$ and $\sum_{j=2}^N \hat w_j(\varepsilon)=1$ for all $\varepsilon\in\mathbb{R}$. Here, $S_j$ represents \emph{relative} model-implied fit: control units with $\hat m_{\tilde{t}}(X_j)$ above (below) the initial weighted average tend to receive more (less) weight, making the update focus on correcting mismatch in model-predicted outcomes. In practice, we use a centered choice based on an outcome-regression model, e.g., $S_j = \hat m_{\tilde{t}}(X_j) - \sum_{k=2}^{N}\hat w^{0}_k\,\hat m_{\tilde{t}}(X_k)$, so that the tilt direction is mean-zero under the initial weights, which prevents a uniform drift in the weights and makes the update depend only on relative differences across control units. This centering improves numerical stability by keeping $\varepsilon$ on a reasonable scale and reducing sensitivity to the overall level of $\hat m_{\tilde{t}}(x)$.

\textbf{(iii)~TMLE-style targeted update.}
We now describe how the targeted update is implemented in practice by solving the residual-correction equation. Given the tilted weights $\hat w(\varepsilon)$, our targeting goal is to update $\varepsilon$ so that the weighted control residuals,
\vspace{-0.2cm}
\begin{equation}
f(\varepsilon) \;=\; \sum_{j=2}^{N}\hat w_j(\varepsilon)\Big(Y_{j{\tilde{t}}}-\hat{m}_{\tilde{t}}(X_j)\Big), \, \hat{\varepsilon} = \arg \min_{\varepsilon} \log \big[\sum_{j=2}^N \hat{w}_j^0 \exp(\varepsilon (Y_{j{\tilde{t}}} - \hat{m}_{\tilde{t}}(X_j))\big],
\label{eq:tmle_score}
\end{equation}
are calibrated to zero, i.e., $\hat{\varepsilon}: f(\hat{\varepsilon}) = 0$. Since $\varepsilon$ is one-dimensional, we can compute $\hat\varepsilon$ efficiently. In practice, we obtain $\hat\varepsilon$ by solving the equivalent convex optimization problem in Eq.~\ref{eq:tmle_score}, for which the first-order derivative with respect to $\varepsilon$ is the same as Eq.~\ref{eq:tmle_score}. In other words, $\hat\varepsilon$ can be viewed as the maximum likelihood estimate of the fluctuation parameter in the weight-tilting submodel for the control weight distribution.

\textbf{Targeted estimator.} We now write down our final estimator. Let $\hat\varepsilon$ denote the root of the estimating equation $f(\varepsilon)=0$. We denote the \textit{targeted} synthetic-control weights by $\hat w^\star=\hat w(\hat\varepsilon)$. The final \method estimator is then given by

\vspace{-0.8cm}
\begin{equation}
\hat{\psi}^{\textsc{tsc}}_{\tilde{t}} =  \sum_{j=2}^{N}\hat w^\star_j\,Y_{j{\tilde{t}}}.
\label{eq:tmle_final}
\end{equation}
\noindent This estimator thus performs \textit{targeting} by updating the initial synthetic-control weights, so that the residual-correction equation is satisfied while ensuring that the estimator is a weighted combination of observed control outcomes.

Intuitively, the above construction has several advantages. (i)~It preserves the simplex constraint on the weights and therefore maintains the interpretation of synthetic-control weights as relative contributions to the counterfactual outcome. (ii)~It allows us to use flexible ML methods to estimate the nuisance components, including the outcome-regression. (iii)~It enforces the vanishing of the first-order estimation error of the outcome-regression model (see next section). 

\vspace{-0.2cm}
\subsection{Properties of \method}
\label{sec:properties}
\vspace{-0.2cm}
Below, we state two favorable properties for our \method method, namely, robustness of misspecification of the outcome-regression model and bounded outcomes.

\emph{Robustness to misspecification of outcome-regression model and imperfect fit of weight estimation:} Our approach follows a TMLE-style targeting principle, but targets the synthetic-control weights rather than the outcome regression. Specifically, we update the initial weights along an exponential-tilting submodel and select $\hat w^\star$ to (approximately) satisfy a residual-balancing condition on the controls, i.e., $\sum_{j=2}^N \hat w_j^\star\{Y_{j{\tilde{t}}}-\hat m_{\tilde{t}}(X_j)\}
= \sum_{j=2}^N \hat w_j^\star Y_{j{\tilde{t}}} - \sum_{j=2}^N \hat w_j^\star\hat m_{\tilde{t}}(X_j) \approx 0$. By calibrating the weights using control-unit residuals, this targeted update mitigates bias from imperfect pre-treatment weight estimation, and the final estimator is robust to the misspecification of $\hat m_{\tilde{t}}(\cdot)$ by enforcing Eq.~\ref{eq:tmle_score} to zero.

\begin{theorem}[Boundedness]
\label{thm:boundedness_binary}
Assume $Y_{j{\tilde{t}}}\in[a, b]$, where $a,b \in \mathbb{R}$, $a<b$, for all control units $j=2,\ldots,N$ and the targeted weights satisfy $\hat w^\star_j \ge 0$, $\sum_{j=2}^{N}\hat w^\star_j = 1$. Then our \method is bounded, $a \le \hat \psi_{{\tilde{t}}}^{\textsc{tsc}} \le b$.
\end{theorem}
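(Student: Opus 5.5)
The plan is to use the definition of the final estimator in Eq.~\ref{eq:tmle_final}, $\hat\psi^{\textsc{tsc}}_{\tilde t}=\sum_{j=2}^N \hat w^\star_j Y_{j\tilde t}$, and exploit that it is a convex combination of the control outcomes. The hypotheses $\hat w^\star_j\ge 0$ and $\sum_{j}\hat w^\star_j=1$ are exactly the simplex conditions. They hold automatically for the tilted weights in Eq.~\ref{eq:weight_tilt} for every $\varepsilon\in\mathbb{R}$, since numerator and denominator are positive multiples of $\hat w^0_j\ge 0$ and the normalization forces unit sum. In particular they hold at $\hat\varepsilon$, so the theorem's assumptions are met by the TSC construction itself, whether or not $f(\hat\varepsilon)=0$ is solved exactly.

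For the lower bound, multiply the pointwise inequality $Y_{j\tilde t}\ge a$ by $\hat w^\star_j\ge 0$. Nonnegativity preserves the direction of the inequality, so $\hat w^\star_j Y_{j\tilde t}\ge a\,\hat w^\star_j$ for each $j$. Summing over $j=2,\ldots,N$ and using $\sum_j \hat w^\star_j=1$ gives
\begin{equation*}
\hat\psi^{\textsc{tsc}}_{\tilde t}=\sum_{j=2}^N \hat w^\star_j Y_{j\tilde t}\;\ge\; a\sum_{j=2}^N \hat w^\star_j = a .
\end{equation*}
The upper bound follows symmetrically from $Y_{j\tilde t}\le b$, giving $\hat\psi^{\textsc{tsc}}_{\tilde t}\le b$.

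Equivalently, and slightly more sharply, the same argument gives $\min_j Y_{j\tilde t}\le \hat\psi^{\textsc{tsc}}_{\tilde t}\le \max_j Y_{j\tilde t}$. This places the estimate in the convex hull of the observed control outcomes, which is the property contrasted with the additive correction of augmented SCM in Eq.~\ref{eq:asc_estimator}.

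There is no real obstacle here. The only point requiring care is that nonnegativity of the weights is essential: with negative weights, as in augmented SCM, the inequality multiplication step fails. I would also note that the result is deterministic and holds for any realization of $\hat m_{\tilde t}$ and $\hat w^0$.
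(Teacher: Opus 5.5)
Your proposal is correct and uses the same argument as the paper. Both proofs multiply $a \le Y_{j\tilde t} \le b$ by the nonnegative weights $\hat w^\star_j$ and sum using $\sum_{j=2}^N \hat w^\star_j = 1$; your version also skips the paper's unneeded intermediate step through $\sum_{j} \hat w^\star_j m_{\tilde t}(X_j)$, and your check that the tilted weights stay on the simplex for every $\varepsilon$ is a correct optional addition.
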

\vspace{-0.2cm}
\noindent\emph{Proof.} See Appendix~\ref{app:proof_bound}. \textbf{Remark.} In classical TMLE, the targeting step typically is a one-dimensional update of the outcome-regression. For binary outcomes, this is commonly implemented on the logit-scale to ensure the updated predictions remain bounded. In the single-treated-unit setting, we do not directly tailor the TMLE idea to the new setting, but tilt the synthetic-control weights, so the final counterfactual is always a convex combination of observed control outcomes. Consequently, boundedness is inherent, even for binary outcomes, without requiring a logit link or any additional transformation for the outcome-regression model.

\subsection{Differences to augmented SCM}
\label{sec:ascm_aipw}

In this section, we compare \method to the augmented SCM to clarify the limitations of the latter. The augmented SCM estimator follows a ``prediction $+$ residual correction” decomposition:
\vspace{-0.2cm}
\begin{align}
\hat{\psi}_{{\tilde{t}}}^{\textsc{asc}} = \underbrace{\hat m_{\tilde{t}}(X_1)}_{\text{plugin term}} + \underbrace{\sum_{j=2}^N \hat w_j\bigl\{Y_{j{\tilde{t}}}-\hat m_{\tilde{t}}(X_j)\bigr\}}_{\text{residual correction term}}.
\end{align}
\vspace{-0.4cm}

\textbf{Remark.} The augmented SCM in~Eq.~(\ref{eq:asc_estimator}) can be interpreted as a plug-in prediction $\hat m_{\tilde{t}}(X_1)$ plus a regression correction. The final outcome does \textit{not} have the boundedness guarantee, since, even if each term individually lies in $[a,b]$, the outcome model prediction in Eq.~(\ref{eq:asc_estimator}) may \textit{not} be in $[a,b]$. 

\textbf{Why our targeted update is beneficial over augmented SCM:} A simple additive residual correction can work well in practice, but it debiases by adding an augmentation term to the plug-in estimator, rather than by updating the synthetic-control weights. Hence, the targeted update has two important advantages: \textbf{(1)}~it preserves the weight-based structure of synthetic control estimators, so the contributions of control units are interpretable through weights, and \textbf{(2)}~it enforces the final counterfactual estimate remains a convex combination of observed synthetic control estimators, so outcomes remain within their natural bounds.

\vspace{-0.2cm}
\section{Experiments}
\vspace{-0.2cm}

We follow best practice in causal ML (e.g., \citet{laan.2006, ben-michael.2021, shi.2022}) to perform experiments that demonstrate the effectiveness of \method across different types of outcomes using synthetic data. Synthetic data has the advantage of providing access to the ground-truth treatment effect and thus allows for direct comparison against oracle estimates. We instantiate all models with the same neural network architectures and hyperparameters. Implementation details are in Appendix~\ref{app:implementation_details}.

\vspace{-0.2cm}
\subsection{Synthetic data}
\vspace{-0.4cm}

\begin{table}[htbp]
\centering
\begin{minipage}{\linewidth}
\begin{minipage}[t]{0.48\linewidth}
\centering
\resizebox{\textwidth}{!}{%
\begin{tabular}{clcccc}
\toprule
\makecell[c]{Prediction\\horizon} & Methods & Linear & Hinge & Quadratic & Time-varying \\
\midrule
\multirow{4}{*}{\centering 1} & Classical SCM & $7.087 \pm 0.366$ & $21.794 \pm 0.203$ & $10.701 \pm 0.283$ & $10.054 \pm 0.869$ \\
 & Plug-in estimator & $7.243 \pm 0.734$ & $20.662 \pm 0.855$ & $11.442 \pm 2.155$ & $13.962 \pm 0.494$ \\
 & Augmented SCM & $7.441 \pm 0.496$ & $21.917 \pm 0.257$ & $10.684 \pm 0.405$ & $9.843 \pm 0.846$ \\
 & \method (\textit{ours}) & $\mathbf{6.128 \pm 0.509}$ & $\mathbf{21.187 \pm 0.458}$ & $\mathbf{10.150 \pm 1.286}$ & $\mathbf{9.104 \pm 0.590}$ \\
\midrule
\multirow{4}{*}{\centering 5} & Classical SCM & $7.055 \pm 0.382$ & $21.900 \pm 0.188$ & $10.713 \pm 0.271$ & $10.414 \pm 0.819$ \\
 & Plug-in estimator & $8.072 \pm 0.609$ & $22.524 \pm 0.888$ & $12.237 \pm 2.123$ & $15.276 \pm 0.541$ \\
 & Augmented SCM & $7.447 \pm 0.574$ & $22.000 \pm 0.237$ & $10.875 \pm 0.462$ & $9.997 \pm 0.879$ \\
 & \method (\textit{ours}) & $\mathbf{6.449 \pm 0.422}$ & $\mathbf{21.634 \pm 0.229}$ & $\mathbf{9.733 \pm 1.127}$ & $\mathbf{9.289 \pm 0.773}$ \\
\midrule
\multirow{4}{*}{\centering 10} & Classical SCM & $7.035 \pm 0.415$ & $21.680 \pm 0.173$ & $10.715 \pm 0.256$ & $10.663 \pm 0.888$ \\
 & Plug-in estimator & $8.086 \pm 0.799$ & $22.918 \pm 0.712$ & $12.966 \pm 2.005$ & $16.257 \pm 0.629$ \\
 & Augmented SCM & $7.542 \pm 0.587$ & $21.654 \pm 0.180$ & $10.945 \pm 0.471$ & $10.259 \pm 1.038$ \\
 & \method (\textit{ours}) & $\mathbf{6.580 \pm 0.429}$ & $\mathbf{20.748 \pm 0.364}$ & $\mathbf{9.923 \pm 1.328}$ & $\mathbf{10.135 \pm 0.807}$ \\
\bottomrule
\end{tabular}
}
\parbox{\linewidth}{\raggedright{\tiny{\textsuperscript{*} Smaller is better. Best value in bold.}}}
\vspace{0.1cm}
\centerline{\small (a) Continuous outcomes}
\end{minipage}
\hfill
\begin{minipage}[t]{0.48\linewidth}
\centering
\resizebox{\textwidth}{!}{%
\begin{tabular}{clcccc}
\toprule
\makecell[c]{Prediction\\horizon} & Methods & Linear & Hinge & Quadratic & Time-varying \\
\midrule
\multirow{4}{*}{\centering 1} & Classical SCM & $0.844 \pm 0.104$ & $0.960 \pm 0.036$ & $0.794 \pm 0.101$ & $0.369 \pm 0.085$ \\
 & Plug-in estimator & $0.965 \pm 0.026$ & $0.981 \pm 0.009$ & $0.684 \pm 0.118$ & $0.526 \pm 0.109$ \\
 & Augmented SCM & $0.705 \pm 0.185$ & $0.740 \pm 0.087$ & $0.882 \pm 0.045$ & $0.448 \pm 0.061$ \\
 & \method (\textit{ours}) & $\mathbf{0.655 \pm 0.117}$ & $\mathbf{0.629 \pm 0.154}$ & $\mathbf{0.657 \pm 0.157}$ & $\mathbf{0.274 \pm 0.032}$ \\
\midrule
\multirow{4}{*}{\centering 5} & Classical SCM & $0.804 \pm 0.110$ & $0.956 \pm 0.039$ & $0.939 \pm 0.075$ & $0.374 \pm 0.078$ \\
 & Plug-in estimator & $1.166 \pm 0.067$ & $0.990 \pm 0.009$ & $0.896 \pm 0.070$ & $0.432 \pm 0.108$ \\
 & Augmented SCM & $0.784 \pm 0.128$ & $0.793 \pm 0.087$ & $0.914 \pm 0.061$ & $0.433 \pm 0.058$ \\
 & \method (\textit{ours}) & $\mathbf{0.599 \pm 0.139}$ & $\mathbf{0.641 \pm 0.153}$ & $\mathbf{0.842 \pm 0.091}$ & $\mathbf{0.318 \pm 0.053}$ \\
\midrule
\multirow{4}{*}{\centering 10} & Classical SCM & $0.848 \pm 0.099$ & $0.934 \pm 0.109$ & $0.933 \pm 0.067$ & $0.360 \pm 0.066$ \\
 & Plug-in estimator & $0.881 \pm 0.096$ & $0.966 \pm 0.024$ & $0.629 \pm 0.117$ & $0.335 \pm 0.082$ \\
 & Augmented SCM & $0.705 \pm 0.185$ & $0.740 \pm 0.087$ & $0.882 \pm 0.045$ & $0.448 \pm 0.061$ \\
 & \method (\textit{ours}) & $\mathbf{0.660 \pm 0.082}$ & $\mathbf{0.635 \pm 0.154}$ & $\mathbf{0.724 \pm 0.092}$ & $\mathbf{0.315 \pm 0.021}$ \\
\bottomrule
\end{tabular}
}
\parbox{\linewidth}{\raggedright{\tiny{\textsuperscript{*} Smaller is better. Best value in bold.}}}
\vspace{0.1cm}
\centerline{\small (b) Binary outcomes}
\end{minipage}
\caption{\textbf{Results for experiments with synthetic data.} Mean $\pm$ standard error across five runs, reported for prediction horizons of 1, 5, 10 steps ahead. $\Rightarrow$ \textit{Our \method performs best across all settings.}}
\label{tab:syn_exp_results}
\end{minipage}
\vspace{-0.6cm}
\end{table}
\textbf{Dataset:} We simulate samples from multiple data-generating functions, including (1)~\textit{linear}, (2)~\textit{hinge}, (3)~\textit{quadratic}, and (4)~\textit{time-varying} outcome functions. Details are in Appendix~\ref{app:data_generation}. For each function, we compare the root mean squared error and report the mean $\pm$ standard error across five runs for prediction horizons of 1, 5, and 10 steps ahead. Altogether, we generate over 50 different datasets under varying scenarios. 
\textbf{Results:} We compare our \method against the \textbf{classical SCM}~\citep{abadie.2010}, a \textbf{plug-in} estimator (Eq.~\ref{eq:plugin_estimator}), and the \textbf{augmented SCM}~\citep{ben-michael.2021}. The results are in Table~\ref{tab:syn_exp_results}~(a) (for continuous outcomes) and Table~\ref{tab:syn_exp_results}~(b) (for binary outcomes). \textit{Across all datasets, \method consistently achieves the best performance.} The improvement for binary outcomes (Table~\ref{tab:syn_exp_results}~(b)) is especially noteworthy. Here, our \method reduces the RMSE relative to the augmented SCM by $\sim$ 18\%. We attribute this improvement to the targeted update, which enforces bounded predictions and avoids unstable extrapolation.

\textbf{Interpretability:} We further demonstrate the interpretability of the weights produced by \method. Figure~\ref{fig:weight} compares the initial synthetic control (light colors) with the final weights (dark colors). \textit{Importantly, we confirm that the targeted weights remain nonnegative and sum to one, allowing them to be directly interpreted as relative contributions of control units to the constructed counterfactual.}

\vspace{-0.1cm}
\begin{wrapfigure}[13]{l}{0.4\textwidth}
    \vspace{-0.5cm}
    \centering
    \includegraphics[width=1\linewidth]{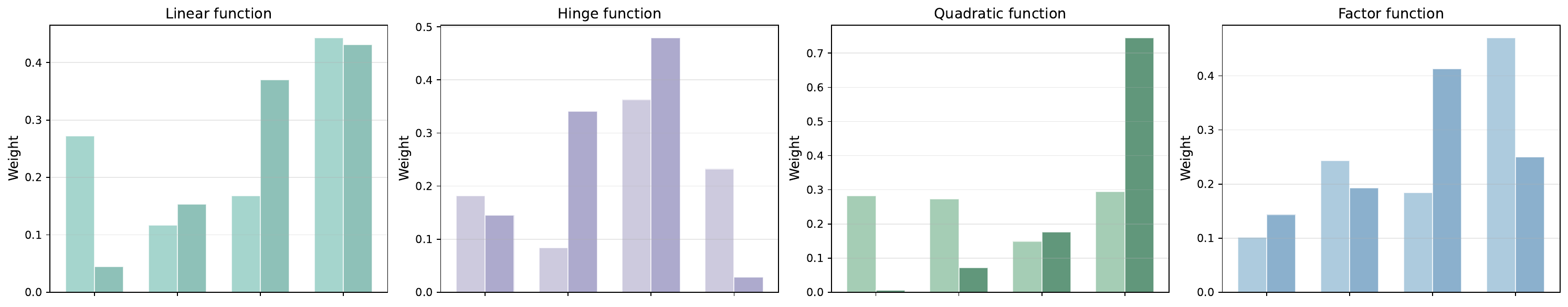}
    \includegraphics[width=1\linewidth]{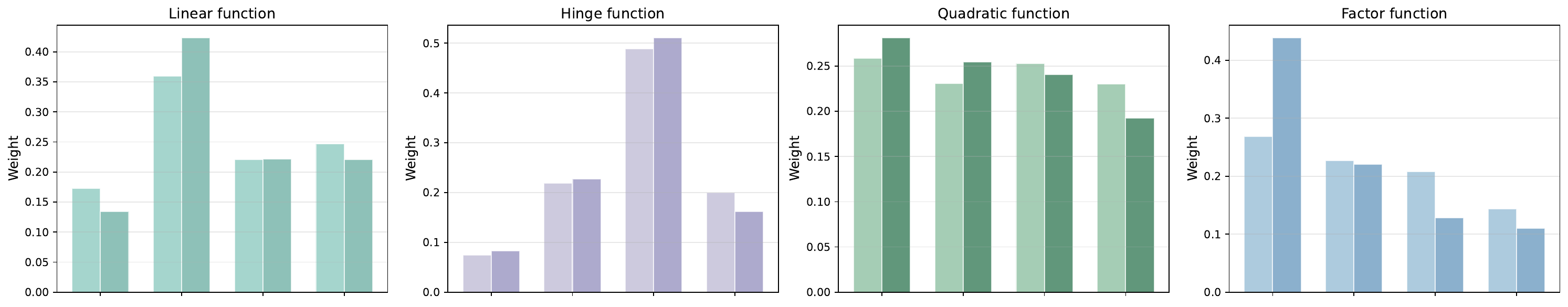}
    \vspace{-0.4cm}
    \caption{\textbf{Insights of weights.} Bars show the weights of control units used to construct the counterfactual (\emph{top row}: continuous; \emph{bottom row}: binary). Shown are the initial synthetic control weights (light colors) and the final weights (dark colors).}
    \label{fig:weight}
    \vspace{-0.2cm}
\end{wrapfigure}

\textbf{Boundedness:} Table~\ref{tab:binary_bound} empirically validates the boundedness property of our method (Theorem~\ref{thm:boundedness_binary}) and thus highlights a key advantage of \method over the augmented SCM: the targeted update guarantees that counterfactual predictions remain bounded. This boundedness ensures that the original interpretation of synthetic-control weights as defined in the classical SCM. In contrast, the augmented SCM produces many out-of-bounds estimates.

\begin{wraptable}[9]{r}{0.4\textwidth}
\vspace{-0.4cm}
\centering
\resizebox{0.4\textwidth}{!}{%
\begin{tabular}{lccccc}
\toprule
Bound & Method & Linear & Hinge & Quadratic & Time-varying \\
\midrule
\multirow{2}{*}{\centering Upper bound} & Augmented SCM & \textcolor{red}{20.00\%} & \textcolor{red}{8.00\%} & \textcolor{red}{52.00\%} & \textcolor{ForestGreen}{0.00\%} \\
 & TSC \textit{(ours)} & \textcolor{ForestGreen}{0.00\%} & \textcolor{ForestGreen}{0.00\%} & \textcolor{ForestGreen}{0.00\%} & \textcolor{ForestGreen}{0.00\%} \\
\midrule
\multirow{2}{*}{\centering Lower bound} & Augmented SCM & \textcolor{red}{54.29\%} & \textcolor{red}{34.00\%} & \textcolor{red}{36.00\%} & \textcolor{ForestGreen}{0.00\%} \\
 & TSC \textit{(ours)} & \textcolor{ForestGreen}{0.00\%} & \textcolor{ForestGreen}{0.00\%} & \textcolor{ForestGreen}{0.00\%} & \textcolor{ForestGreen}{0.00\%} \\
\bottomrule
\end{tabular}
}
\parbox{0.52\textwidth}{
  \raggedright
  {\scriptsize{\textsuperscript{*} Violation rates above zero are shown in \textcolor{red}{red}.}}
}
\caption{Percentage of counterfactual estimations outside $[0,1]$ on binary outcomes. $\Rightarrow$ \textit{\method achieves \textcolor{ForestGreen}{0.00\%} violation rate; augmented SCM does not.}}
\label{tab:binary_bound}
\vspace{-1em}
\end{wraptable}

\vspace{-0.3cm}
\subsection{Real-world data}
\vspace{-0.2cm}

\textbf{Data:} We conduct case studies using two real-world datasets. \textbf{(a) Turnout data:} We use the Turnout rate dataset~\citep{springer.2014}, which records U.S. state-level elections and voter turnout rates from 1920 to 2000. As the targeted unit, we focus on New Hampshire, which adopted same-day voter registration in 1996, and use other states as controls. \textbf{(b) California tobacco data:} We use the California smoking dataset from \citet{abadie.2010}, which records annual per-capita cigarette sales across U.S. states from 1970 to 2000. The targeted unit is California, which enacted Proposition 99 (i.e., a larger health reform, which also included a tax increase on tobacco) in 1988, and we use 38 other states as controls.

\vspace{-0.1cm}
\begin{figure}[htbp]
    \centering
    \begin{subfigure}[b]{0.45\linewidth}
        \includegraphics[width=\linewidth]{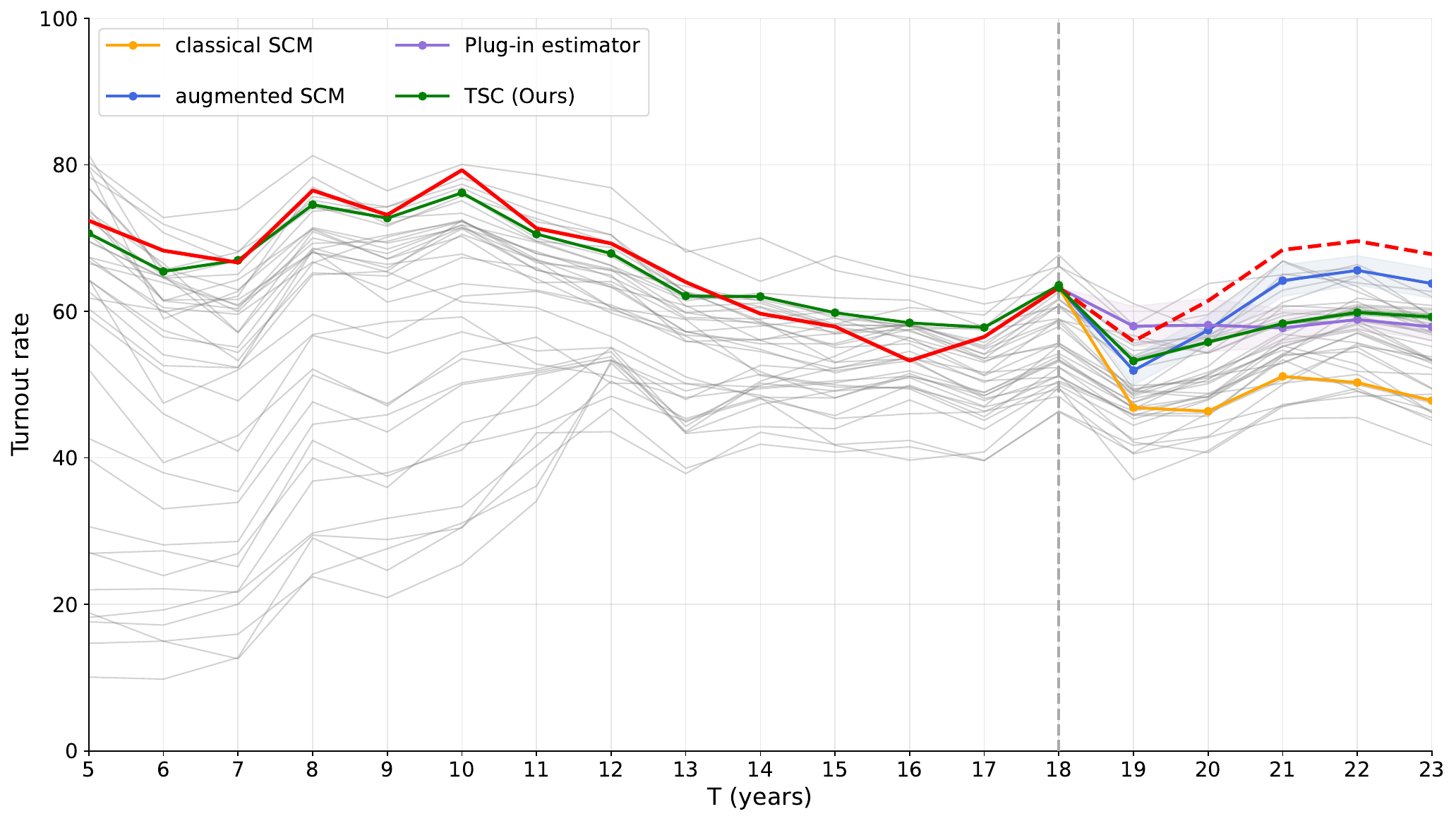}
        \caption{Turnout data}
        \label{fig:turnout_data}
    \end{subfigure}
    \hfill
    \begin{subfigure}[b]{0.45\linewidth}
        \includegraphics[width=\linewidth]{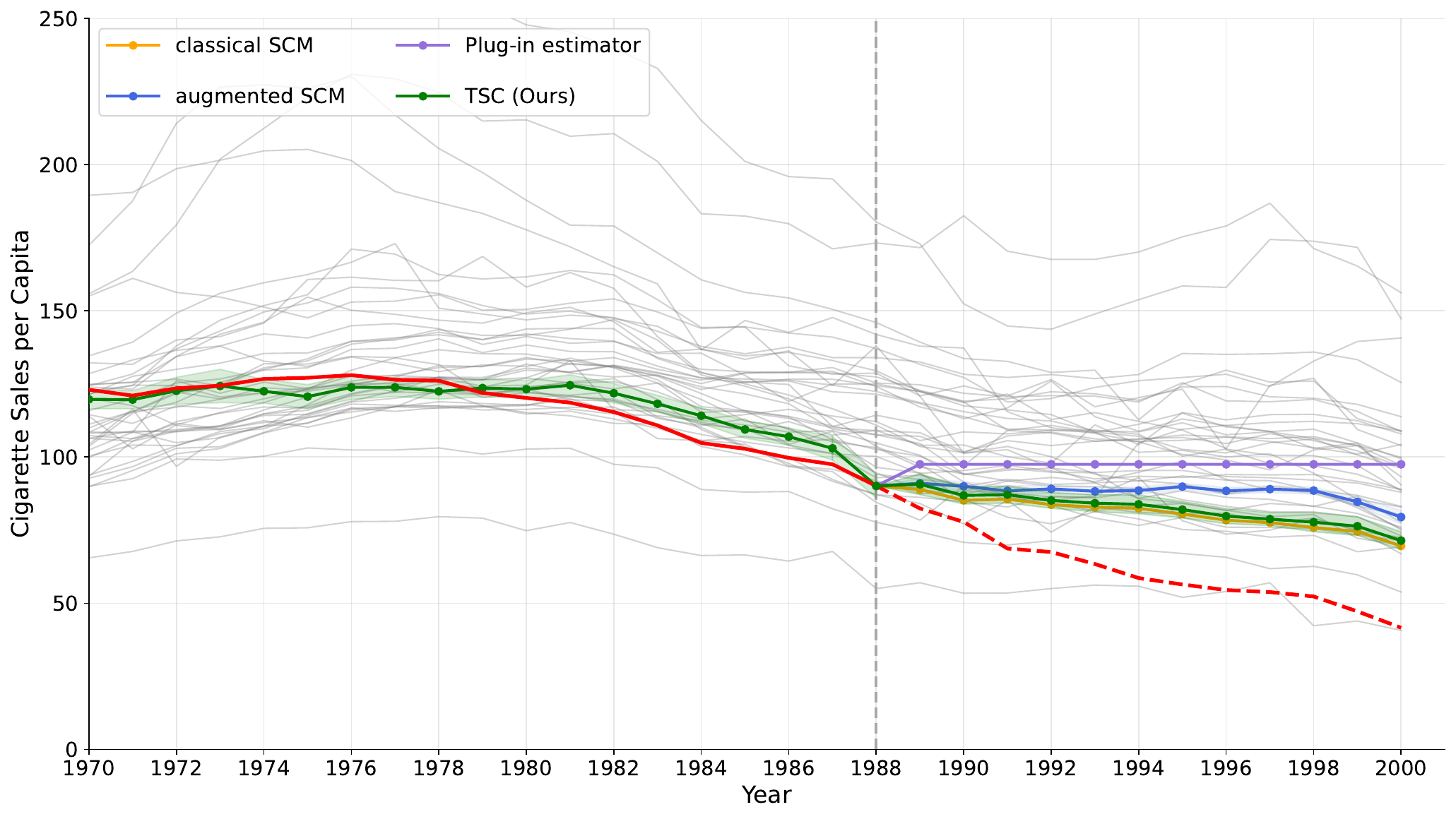}
        \caption{California tobacco data}
        \label{fig:smoke_data}
    \end{subfigure}
    \vspace{-0.3cm}
    \caption{\textbf{Real-world experiments.} The trajectories of the control units are shown in \textcolor{gray}{gray}, and the treated unit in \textcolor{red}{red}. We plot counterfactual estimates from augmented SCM (\textcolor{blue}{blue}), classical SCM (\textcolor{orange}{orange}), the plug-in estimator (\textcolor{darkpurple}{purple}), and \method (\textcolor{ForestGreen}{green}). The vertical dashed line indicates the treatment time $T_0$. \method matches the pre-treatment history closely in both datasets.}
    \label{fig:real_world_data}
    \vspace{-0.2cm}
\end{figure}

\textbf{Results:} In Figure~\ref{fig:real_world_data}, we plot the predicted counterfactual trajectory of the treated unit under no treatment for both datasets. \textbf{(a) Turnout data:} Same-day registration is expected to increase voter turnout, reflected in the observed increase in the treated unit after treatment (shown in \textcolor{red}{red}). Our \method (\textcolor{ForestGreen}{green}) yields a stable counterfactual trajectory, implying a large positive treatment effect. In contrast, the classical SCM exhibits a systematic downward deviation, while the augmented SCM shows an upward deviation, indicating bias in the constructed controls. \textbf{(b) California tobacco data:} Proposition 99 is expected to reduce cigarette consumption. Our \method (\textcolor{ForestGreen}{green}) produces a stable counterfactual that remains within the range of control outcomes, suggesting a substantial negative treatment effect after 1988 in line with the finding from the classical SCM in \citet{abadie.2010}. The plug-in and augmented SCM both show less plausible results in the post-treatment period, whereas \method tracks the pre-treatment trajectory closely and yields bounded predictions throughout.

\textbf{Conclusion:} We propose a targeted synthetic control (TSC) method that learns synthetic-control weights through a novel TMLE-style debiasing procedure, while keeping the counterfactual a convex combination of control outcomes. \textbf{Future directions:} The same idea could be helpful to other settings, such as single-arm trials in medicine with external or historical controls. 

\newpage
\section*{Acknowledgement}
Our research is supported by the DAAD programme Konrad Zuse Schools of Excellence in Artificial Intelligence, sponsored by the Federal Ministry of Research, Technology and Space.

\bibliographystyle{abbrvnat}
\bibliography{literature}


\newpage
\appendix
\onecolumn

\section{Extended related work}
\label{app:extended_related_work}

\textbf{Semiparametric inference and orthogonal learning:} The concept of Neyman orthogonality is deeply rooted in semiparametric efficiency theory~\citep{kennedy.2023a, vaart.2000}. Neyman-orthogonal and efficient influence function-based estimators have a long tradition in causal inference, primarily for the estimation of average treatment effects. Examples include the AIPTW estimator~\citep{robins.1994}, TMLE~\citep{laan.2006,laan.2021}, and the Double~ML framework~\citep{chernozhukov.2018}. Recently, the concept of Neyman orthogonality has been extended to HTEs~\citep{foster.2023}, which allowed the construction of various orthogonal learners, including the DR- and R-learner for conditional average treatment effects~\citep{kennedy.2023, morzywolek.2024, nie.2020}. \textit{None of these works are tailored to the single-treated unit setting.}

\textbf{Heterogeneous treatment effects estimation over time:} There has been much research on the estimation of heterogeneous treatment effects (HTE) over time, often leveraging several backbone architectures. Early approaches build on recurrent and state-space sequence models that learn latent disease dynamics and simulate counterfactual treatment trajectories \citep{lim.2018, li.2021}. More recent methods leverage representation learning and deep generative modeling, including variational and adversarial formulations, to capture complex, high-dimensional temporal confounding and treatment responses \citep{bica.2019, melnychuk.2022, seedat.2022}. In parallel, transformer-style and other attention-based backbones have been proposed to improve long-horizon modeling and to better exploit irregular, multivariate patient histories, but with various strategies for confounder adjustment~\citep{hess.2024, hess.2025a, hess.2025c, frauen.2025, wang.2025a, ma.2025c}.

\textbf{CATT estimation under other settings:} A growing literature develops more flexible, non-linear approaches for estimating CATT~\citep[e.g.,][]{nazaret.2024, tian.2023, spiess.2023}, with broad applications in public policy \citep[e.g.,][]{markle-huss.2018}. Existing methods can be broadly grouped into three strands: (a) Synthetic twin approaches~\citep[e.g.,][]{qian.2021, bica.2019, bellot.2021a} predict the treated unit from trajectories and the control pool, effectively turning the problem into a flexible forecasting task. However, these models provide no formal identifiability guarantee for the prediction. (b) Difference-in-differences (DiD) style methods~\citep[e.g.,][]{card.1993, lan.2025, arkhangelsky.2021, sun.2025} leverage non-linear learners but still rely on strong identifying restrictions, most notably a parallel-trends assumption. (c) Proxy causal inference approaches  \citep[e.g.,][]{shi.2023, liu.2024, park.2025, oriordan.2024, zeitler.2023} attempt to recover causal effects by introducing proxy variables, and identifiability hinges on additional proxy assumptions. \textit{Overall, while these non-linear methods can improve predictive fit, their causal interpretation typically requires additional, method-specific assumptions.}

\textbf{More synthetic control variants:}
A related line of work localizes control construction by combining synthetic control with matching-style restrictions. For instance, caliper/radius-based procedures form \emph{local} control pools via distances and adaptive calipers and then fit synthetic controls within these neighborhoods~\citep{che.2024}. Complementarily, matching–synthetic-control hybrids use cross-validated model averaging to balance the extrapolation bias of matching against interpolation bias of synthetic controls~\citep{kellogg.2021}. These methods can improve finite-sample stability by restricting or reweighting the control set, but they remain primarily weight-based constructions and are typically not derived from Neyman-orthogonal estimating equations. In parallel, recent work develops asymptotically unbiased or debiased synthetic-control estimators \citep{fry.2024b, fry.2025, chernozhukov.2025}, largely focusing on inference for averages of post-treatment effects rather than period-specific targets.

\newpage
\section{One-step debiasing vs. TMLE}
\label{app:eif_tmle_background}
We consider a simple example of estimating the ATE in a randomized controlled trial setting to illustrate the difference between the one-step debiasing estimator (AIPW) and TMLE. Let $Z = (X, A, Y)\sim P$ denote observed data, where $X \in \gX \subseteq \sR^{q}$ are observed covariates, $A \in \gA = \{0, 1\}$ is binary treatment, $Y \in \gY \subseteq \sR$ represents outcomes. We assume that we have an observed dataset $\gD = \{(x_i, a_i, y_i)\}_{i=1}^n$ of size $n\in N$ are sampled i.i.d from $\probp$. We use the potential outcomes framework~\citep{rubin.1974} and denote $Y(a)$ as the potential outcome corresponding to a treatment $A = a$. We define the \emph{response functions} as $\mu_a(x) = \E[Y\mid X = x, A = a]$ for $a\in \{0, 1\}$ and the \emph{propensity score} as $\pi(x) = \probp[A = 1 \mid X= x]$. We refer to these functions as \emph{nuisance functions}, denoted by $\eta = (\mu_1, \mu_0, \pi)$. Besides, we impose standard causal inference assumptions~\citep{laan.2006, chernozhukov.2018}, including consistency, overlapping assumption, and ignorability, to ensure the identification of the ATE.

\textbf{Efficient influence functions (EIFs):} Our targeted causal estimand is the ATE, and according to standard causal inference assumptions, we convert the causal estimand to a statistical estimand:
\begin{align}
\underbrace{\E[Y(1) - Y(0)]}_{\text{causal estimand}} =& \E\left[ \E[Y(1) - Y(0) \mid X = x]\right]
= \underbrace{\E[\mu_1(x) - \mu_0(x)]}_{\text{statistical estimand}}.
\end{align}

Since $\psi = \E[\mu_1(x) - \mu_0(x)]$ has a so-called von Mises or distributional Taylor expansion~\citep{vaart.2000}, we can derive the efficient influence function 
\begin{align}
\label{eq:eif_ate}
\phi = \mu_1(X) - \mu_0(X) + \frac{A}{\pi(X)} (Y - \mu_1(X)) - \frac{1-A}{1-\pi(X)} (Y - \mu_0(X)) - \psi.
\end{align}

\textbf{One-step debiased estimators (AIPW).}
Given initial nuisance estimates $\hat\eta$, a generic EIF-based \emph{one-step} debiasing estimator takes the form
\begin{align}
\label{eq:one_step_general}
\hat\psi^{\text{one-step}} 
=&\psi(\hat P)+\frac{1}{n}\sum_{i=1}^n \phi(Z_i;\hat\eta),
\end{align}
where $\psi(\hat \probp)$ is a plug-in estimator based on an estimated distribution $\hat \probp$ (equivalently, based on fitted nuisance functions), $\psi(\hat P) = \frac{1}{n}\sum_{i=1}^n[\hat\mu_1(x_i)-\hat\mu_0(x_i)]$. Plugging Eq.~(\ref{eq:eif_ate}) into Eq.~(\ref{eq:one_step_general}) yields the one-step debiasing, i.e., the augmented inverse probability weighting (AIPW) estimator
\begin{equation}
\label{eq:aipw}
\hat\psi^{\textsc{aipw}}
= \frac{1}{n}\sum_{i=1}^n
\left[ \hat\mu_1(x_i)-\hat\mu_0(x_i) +
\frac{a_i}{\hat\pi(x_i)}\bigl(y_i-\hat\mu_1(x_i)\bigr) -
\frac{1-a_i}{1-\hat\pi(x_i)}\bigl(y_i-\hat\mu_0(x_i)\bigr)
\right].
\end{equation}
Under suitable regularity conditions (often paired with cross-fitting), the AIPW estimator is asymptotically normal and achieves the semiparametric efficiency bound when the nuisance estimators are sufficiently accurate. Moreover, it enjoys the familiar \emph{double robustness} property: it remains consistent if either the propensity model $\pi$ or the outcome models $\{\mu_a\}_{a\in\{0,1\}}$ are consistently estimated (with appropriate positivity).

\textbf{Targeted maximum likelihood estimation (TMLE).}
TMLE is another EIF-based debiasing way by \emph{targeting} a nuisance function through a low-dimensional update. Starting from initial nuisance function estimates $\hat\eta$, TMLE defines a parametric fluctuation (submodel) $\{\hat\eta_\epsilon:\epsilon\in\mathbb{R}^d\}$
through $\hat\eta$ and chooses $\hat\epsilon$ by minimizing a loss (e.g., negative log-likelihood) in a way that enforces the EIF estimating equation:
\begin{align}
\label{eq:tmle_score_equation}
\frac{1}{n}\sum_{i=1}^n \phi(Z_i;\hat\eta_{\hat\epsilon}) \approx 0.
\end{align}

We take the response functions as an example. Start with 
\begin{align}
    \mu_A^\varepsilon (X) = \hat{\mu}_A(X) + \varepsilon \hat{r}(A, X),
\end{align}
where $\hat{r}(A, X) = \frac{A}{\hat\pi(X)} - \frac{1-A}{1-\hat\pi(X)}$ is the clever covariate, $\varepsilon \in \sR$.
The fluctuation parameter $\hat\varepsilon$ is obtained by maximizing the likelihood along the chosen fluctuation submodel. Equivalently, one can estimate $\hat\epsilon$ by fitting a regression of $Y$ on the clever covariate $\hat r(A,X)$ using $\hat\mu_A(X)$ as an offset and imposing no intercept term. When the outcome $Y$ is bounded (e.g., binary), it is common to use a link function such as the logit link in the targeting step so that the updated regression $\hat\mu^{\hat\epsilon}(X)$ remains within the admissible range. Then, the targeted response functions are $ \hat\mu_a^\star(X) := \hat\mu_a^{\hat\epsilon}(X)$.

Finally, the TMLE estimator is 
\begin{align}
\hat\psi^{\textsc{tmle}}
= \frac{1}{n}\sum_{i=1}^n
 \hat\mu_1^\star(x_i)-\hat\mu_0^\star(x_i).
\end{align}

\textbf{Why TMLE can be preferable in practice.}
Both the one-step estimator and TMLE share the same EIF, which means both of them can attain semiparametric efficiency under comparable nuisance functions fitting conditions (e.g., cross-fitting). However, TMLE often has two practical advantages. \textit{First}, as a likelihood-based substitution estimator, TMLE inherits the natural parameter constraints of the chosen model: by updating the nuisance fit \textit{within} a \textit{constrained} likelihood submodel (e.g., via a logit link), the resulting predictions remain in the admissible range, so bounded outcomes yield bounded estimated means or probabilities. \textit{Second}, TMLE chooses the fluctuation parameter to (approximately) satisfy the EIF estimating equation in Eq.~(\ref{eq:tmle_score_equation}), absorbing the first-order bias correction into a targeted nuisance update rather than a single additive adjustment; this typically leads to improved numerical behavior and more stable finite-sample performance.
 Thus, TMLE is frequently preferred when nuisance models are flexible and finite-sample behavior matters.

\newpage
\section{Background on the augmented SCM}

The augmented SCM reduces bias by combining a synthetic-control-style weighting with an outcome-regression model fitted on control data \citep{ben-michael.2021}. Let $\hat m_{\tilde{t}}(\cdot)$ denote a regression estimator trained on the control units to predict $Y_{i{\tilde{t}}}$ from pre-treatment history, and define residualized outcomes
$\tilde Y_{i{\tilde{t}}} = Y_{i{\tilde{t}}} - \hat m_{\tilde{t}}(X_i).$
The augmented SCM estimates weights by matching residualized pre-treatment trajectories, i.e.,
\begin{equation}
\label{eq:aSC method_weights}
\hat{\mathbf{w}}^{\textsc{asc}} \in 
\arg\min_{\mathbf{w}\in\Delta^{N-1}}
\left\|
X_{1} - \sum_{j=2}^N w_j X_{j}
\right\|_{V}^{2}
 + \lambda\mathcal{R}(\mathbf{w}), \quad \forall t \leq T_0,
\end{equation}
where $\mathcal{R}(\mathbf{w})$ is an optional regularizer (e.g., ridge-type), $V \succeq 0$ is a user-chosen importance matrix that prioritizes particular control units, and $\lambda \geq 0$ controls shrinkage. The counterfactual in the augmented SCM takes the regression prediction for the treated unit and adds a weighted correction based on control residuals:
\begin{align}
\label{eq:asc_method_counterfactual}
\hat \psi_{\tilde{t}}^{\textsc{asc}}
&= \hat m_{\tilde{t}}(X_1) + \sum_{j=2}^{N} \hat w_{j}^{\textsc{asc}} 
\bigl( Y_{j{\tilde{t}}} - \hat m_{\tilde{t}}(X_j) \bigr),
\end{align}
Intuitively, $\hat{m}_{\tilde{t}}(X_1)$ provides a flexible plug-in prediction, while the weighted residual term serves as a residual correction, leveraging observed control outcomes to offset model bias. The augmented SCM can substantially improve performance when pre-treatment fit is imperfect. However, its accuracy can depend on nuisance modeling choices and on how the weighting and regression components interact.

\begin{algorithm}[t]
\caption{Augmented SCM}
\label{alg:asc}
\begin{algorithmic}[1]
\Require Control data $\{(X_j, Y_{j\tilde{t}})\}_{j=2}^N$, treated covariate $X_1$, importance matrix $V$, shrinkage parameter $\lambda$
\State \textbf{/* Stage 1: Nuisance function estimation */}
\State $\hat{m}_{\tilde{t}}(x) \gets \mathbb{E}[Y_{\tilde{t}}(0) \mid X = x]$
\State $\hat{w} \gets \arg\min_{w \in \Delta^{N-1}} \left\| X_1 - \sum_{j=2}^N w_j X_j \right\|_{V}^{2} + \lambda \mathcal{R}(w)$
\State $\tilde{Y}_{j\tilde{t}} \gets Y_{j\tilde{t}} - \hat{m}_{\tilde{t}}(X_j)$ \quad for controls $j = 2, \ldots, N$
\State \textbf{/* Stage 2: Augmented synthetic control */}
\State $\hat{\psi}^{\textsc{asc}}_{\tilde{t}} \gets \hat{m}_{\tilde{t}}(X_1) + \sum_{j=2}^{N} \hat{w}_j \bigl( Y_{j\tilde{t}} - \hat{m}_{\tilde{t}}(X_j) \bigr)$
\State \Return counterfactual $\hat{\psi}^{\textsc{asc}}_{\tilde{t}}$
\end{algorithmic}
\end{algorithm}

\newpage
\section{Proofs}

\subsection{Proofs of Theorem~\ref{thm:boundedness_binary}}
\label{app:proof_bound}
Since $\hat w^\star \in \Delta^{N-1}$, we have $\hat w^\star_j \ge 0$ and $\sum_{j=2}^N \hat w^\star_j = 1$.
Thus, $\hat{\psi}^{\textsc{tsc}}_{{\tilde{t}}}$ is a convex combination of the observed control outcomes at time $\tilde{t}$.
When $Y_{j\tilde{t}}\in[a, b]$ for all $j=2,\ldots,N$, this convex-combination form immediately implies boundedness:
\begin{equation}
\hat \psi^{\textsc{tsc}}_{\tilde{t}}
= \sum_{j=2}^{N}\hat w^\star_j m_{\tilde{t}}(X_j)
=
\sum_{j=2}^{N}\hat w^\star_j Y_{j{\tilde{t}}}
\ge
\sum_{j=2}^{N}\hat w^\star_j\cdot a
=
a,
\label{eq:boundedness_lower}
\end{equation}
where the inequality uses $\hat w^\star_j \ge 0$ and $Y_{j\tilde{t}}\ge a$. Then, we yield
\begin{equation}
\hat{\psi}^{\textsc{tsc}}_{{\tilde{t}}}
= \sum_{j=2}^{N}\hat w^\star_j m_{\tilde{t}}(X_j)
=
\sum_{j=2}^{N}\hat w^\star_j Y_{j\tilde{t}}
\le
\sum_{j=2}^{N}\hat w^\star_j\cdot b
=
b,
\label{eq:boundedness_upper}
\end{equation}
where the inequality uses $Y_{j\tilde{t}}\le 1$ and $\sum_{j=2}^N \hat w^\star_j = 1$.

Therefore, $\hat \psi^{\textsc{tsc}}_{\tilde{t}}\in[a,b]$.
More generally, if $Y_{j\tilde{t}}\in[a,b]$, then $\hat{\psi}^{\textsc{tsc}}_{{\tilde{t}}}\in[a,b]$ by the same argument.

For the one-step estimator, note that it takes the affine form
\begin{align}
\hat\psi^{\text{1-step}}_{\tilde{t}}
=
\hat m_{\tilde{t}}(X_1)
+
\sum_{j=2}^{N}\hat w_j Y_{j{\tilde{t}}}
-
\sum_{j=2}^{N}\hat w_j \hat m_{\tilde{t}}(X_j),
\label{eq:onestep_affine}
\end{align}
which is a difference of two convex combinations plus $\hat m_{\tilde{t}}(X_1)$. Even if each term individually lies in $[a, b]$,
their signed combination in Eq.~(\ref{eq:onestep_affine}) need not lie in $[a,b]$.
\hfill$\square$

\newpage
\section{Implementation details}
\subsection{Data generation}
\label{app:data_generation}
\textbf{Synthetic data generation.}
We generate four synthetic data-generating processes (DGPs) to test the methods under increasing levels of nonlinearity and time variation.
Across all settings, each unit is associated with a $12$-dimensional covariate vector $x\in\mathbb{R}^{12}$ with each coordinate sampled in the range $[0, 10]$, four control units, i.e., $N = 5$, and outcomes are observed over time $t=1,\ldots,T$.
In our experiments, we set $T=50$ for the linear, hinge, and quadratic DGPs, and $T=100$ for the time-varying DGP. We further consider four control units and one treated unit. For the different prediction horizons, we set $T_0 = T -1$, $T-5$, or $T - 10$. We construct the outcome as an additive decomposition into a purely time-dependent trend $\delta(t)$, a unit-dependent component $r(x)$, and an interaction term $\lambda(x,t)$ that couples covariates and time, plus noise.
This design allows us to separately control (i) the smoothness/nonlinearity of temporal patterns, (ii) nonlinear effects of covariates, and (iii) interactions between covariates and time. In all DGPs, we generate $\varepsilon \sim \gN (0, 1)$.

\medskip
\noindent\textbf{Linear function.}
We start from a linear specification where all components are linear in time and/or additive in the covariates:
\begin{align}
&\delta(t) = 0.05 \cdot t,\\
&r(x) = 0.02 \cdot \sum_{i=1}^p x_i,\\
&\lambda(x, t) = 0.1 \cdot \sum_{i=1}^p x_i + 0.05 \cdot t + 0.004 \cdot \sum_{i=1}^p x_i \cdot t,\\
&Y(x, t) = \delta(t) + r(x) + \lambda(x, t) + \varepsilon.
\end{align}

\medskip
\noindent\textbf{Hinge function.}
Next, we introduce piecewise-linear (\emph{hinge}) effects through the positive-part operator $(u)_+$. This creates kinks both in time and in the covariate effect, thus capturing settings where trends change after a threshold (e.g., regime shifts) and where covariate effects are nonlinear but remain sparse and interpretable. We use:
\begin{align}
&(u)_+ := \max\{u,0\},\\
&\delta(t) = 0.03\, t \;+\; 0.04\, (t - t_0)_+, &(t_0=10). \\
&r(x) = 0.1\left(\sum_{j=1}^{p} x_j\right)
 \;+\; 0.15\left(\sum_{j=1}^{p} x_j - c\right)_+, &(c=0). \\
&\lambda(x,t) = 0.1\left(\frac{1}{p}\sum_{j=1}^{p} x_j\right)
 \;+\; 0.04\, t
 \;+\; 0.02\left(\frac{1}{p}\sum_{j=1}^{p} x_j\right)(t - t_0)_+, \\
&Y(x,t) = \delta(t) + r(x) + \lambda(x,t) + \varepsilon.
\end{align}

\medskip
\noindent\textbf{Quadratic function.}
We further increase nonlinearity by using quadratic terms in both the time trend and the covariate component. We use:
\begin{align}
&\delta (t) = 0.04\cdot t + 0.002\cdot t^2, \\
&r(x) = 0.1\cdot \left(x - \frac{1}{p}\sum_{i=1}^p x_i\right) + 0.03\cdot \left(x - \frac{1}{p}\sum_{i=1}^p x_i\right)^2,\\
&\lambda(x,t) = 0.1\cdot \frac{1}{p}\sum_{i=1}^p x_i + 0.05\cdot t + 0.01\cdot \frac{1}{p}\sum_{i=1}^p x_i \cdot t + 0.005\cdot\left(\frac{1}{p}\sum_{i=1}^p x_i\right)^2,\\
&Y(x, t) = \delta(t) + r(x) + \lambda(x, t) + \varepsilon.
\end{align}

\medskip
\noindent\textbf{Time-varying function.}
Finally, we consider a richer time-varying DGP with latent factors and covariate-driven loadings, which mimics the type of low-rank temporal structure commonly assumed in panel-data settings. Here, $z_{ik}$ denotes standardized covariates, $\tau_t$ rescales time to $[0,1]$, $F_t$ collects nonlinear time basis functions, and $B_i$ represents unit-specific factor loadings constructed from $z_i$.
The $(a,\Theta)$ control which covariates contribute to which latent components, while the rescaling steps ensure the different factor dimensions have comparable magnitudes. We use:
\begin{align}
& z_{ik}=\frac{X_{ik}-\bar X_{\cdot k}}{s_k+10^{-6}}, \\
&\tau_t=\frac{t-1}{T-1}, \\
& \delta_t = 2 + 18\,\tau_t + 14\,\tau_t^2, \\
& f_{1t}=\tau_t-\tfrac12,\quad 
f_{2t}=(\tau_t-\tfrac12)^2-\tfrac{1}{12},\quad
f_{3t}=\sin(2\pi\tau_t), \\
& F_t = \begin{bmatrix} f_{1t} \\ f_{2t} \\ f_{3t} \end{bmatrix},\\
&a_k = \begin{cases}
0.6, & k=1,\\
-0.4, & k=2,\\
0.3, & k=3,\\
0, & \text{else},
\end{cases} \\
& \Theta=\begin{bmatrix}
1.0 & -0.6 & 0.4 & 0 & 0 & 0 & 0 & 0 & 0 & \cdots \\
0 & 0 & 0 & 1.2 & -0.5 & 0.3 & 0 & 0 & 0 & \cdots \\
0 & 0 & 0 & 0 & 0 & 0 & 0.8 & 0.4 & -0.7 & \cdots
\end{bmatrix}, \Theta_{ij}=0\ \text{for all unspecified entries.} \\
& \mu_i = z_i^\top a + 0.8\,\xi_i,\\
& B_i = z_i^\top \Theta^\top,\quad B_{:1}\leftarrow \frac{2\,B_{:1}}{\operatorname{sd}(B_{:1})+10^{-6}},\quad
B_{:2}\leftarrow \frac{4\,B_{:2}}{\operatorname{sd}(B_{:2})+10^{-6}},\quad
B_{:3}\leftarrow \frac{1\,B_{:3}}{\operatorname{sd}(B_{:3})+10^{-6}}, \\
& B_{1,2}\leftarrow B_{1,2}+2, \\
& Y(x, t) = \mu_i + \delta_t + B_i F_t + \varepsilon_{it}, \quad \varepsilon_{it}\sim \mathcal{N}(0,0.8^2).
\end{align}

\noindent\textbf{Continuous vs.\ binary outcomes.}
For continuous outcomes, we directly use $Y(x,t)$ as the observed outcome.
For binary outcomes, we map the latent score $Y(x,t)$ to a probability by min--max normalization,
$\pi(x, t) = \frac{ Y(x, t)- \min Y(x, t) }{\max Y(x, t) - \min Y(x, t) }$, where $\max$ and $\min$ over $x$ and $t$,
and then sample $Y(x, t) \sim \text{Bernoulli}(\pi(x, t))$.
This construction ensures the binary outcome is bounded by design while preserving the relative ordering induced by the latent DGP.

\newpage
\subsection{Implementation details}
\label{app:implementation_details}
All experiments are implemented in Python 3.11 with PyTorch 2.2.2 and run on a local AMD Ryzen 7 PRO 6850U 2.70~GHz CPU (8 cores) with 32~GB RAM (macOS); all computations use CPU only. 

For the outcome-regression $m_{\tilde{t}}(X)$, we use a small multi-layer perceptron (MLP): for continuous outcomes, a one-hidden-layer network with 100 units and ReLU trained with MSE and SGD (learning rate $1\times 10^{-2}$, 2000 steps). 

Initial weights are obtained by least squares and projected onto the simplex, and the targeting tilt uses a softmax parameterization to preserve the simplex constraint. Baselines (classical SCM, augmented SCM, and plug-in estimator) use the same $m$-estimator settings (learning rate $1\times 10^{-2}$, 2000 steps) and weights estimation. Random seeds are fixed, and deterministic settings are enabled for reproducibility.

\end{document}